\documentclass[11pt]{article}

\usepackage[T1]{fontenc}
\usepackage{amsmath,amssymb,amsthm,mathtools}
\usepackage{booktabs}
\usepackage{enumitem}
\usepackage{geometry}
\usepackage{graphicx}
\usepackage{microtype}
\usepackage[numbers,sort&compress]{natbib}
\usepackage{xcolor}
\usepackage[hidelinks]{hyperref}

\setlist{nosep,leftmargin=*}
\allowdisplaybreaks
\newtheorem{theorem}{Theorem}[section]
\newtheorem{proposition}[theorem]{Proposition}
\newtheorem{lemma}[theorem]{Lemma}
\newtheorem{corollary}[theorem]{Corollary}
\theoremstyle{definition}
\newtheorem{assumption}[theorem]{Assumption}
\newtheorem{definition}[theorem]{Definition}
\theoremstyle{remark}
\newtheorem{remark}[theorem]{Remark}

\newcommand{\E}{\mathbb{E}}
\newcommand{\R}{\mathbb{R}}
\newcommand{\1}{\mathbf{1}}
\newcommand{\dd}{\,\mathrm{d}}
\newcommand{\argmax}{\operatorname*{arg\,max}}
\newcommand{\clip}{\operatorname{clip}}

\hypersetup{
  pdftitle={When Does the Best Sampling Temperature Rise with the Budget? Sufficient Conditions for Pass@k},
  pdfauthor={Changsu Jeong},
  pdfsubject={Sufficient conditions for budget-dependent optimal sampling temperature under Pass@k},
  pdfkeywords={Pass@k, inference-time scaling, sampling temperature, monotone comparative statics}
}

\title{\textbf{When Does the Best Sampling Temperature Rise with the Budget?}\\
Sufficient Conditions for \(\pass@k\)}
\author{Changsu Jeong\\
\small Independent Researcher}
\date{July 2026}

\begin{document}
\maketitle

\begin{abstract}
The temperature that maximizes pass@\(k\) is often low for a small sampling
budget and higher for a large budget. This pattern has been reported from
Codex through recent multi-sample inference studies. It is not an algebraic
property of pass@\(k\): as \citet{slocum2025diverse} also observe, for one
fixed task the maximizing temperature is independent of \(k\).

Building on that fixed-task observation and the prior qualitative hard/easy-task
explanation, we give a formal population-level sufficient condition for the
aggregate pattern.
For task \(X\), let \(p_t(X)\) be one-sample success probability at temperature
\(t\), and define the conditional log-success response
\(m_t(u)=\E[\dot p_t(X)\mid p_t(X)=u]/u\). If \(m_t(u)\) is
nonincreasing in current success probability, then the
normalized temperature derivative of aggregate pass@\(k\) is nondecreasing in
\(k\). Consequently, derivative signs are nested across budgets; if each
temperature-performance curve is strictly single-peaked, its unique maximizer
is nondecreasing in \(k\). The proof identifies the mechanism as a
monotone-likelihood-ratio power tilt toward lower-success tasks.

We derive a closed-form two-stratum phase diagram, including upward and
downward regimes, and show that the marginal temperature derivative admits an
exact \(\mathrm{Beta}(2,k)\) kernel representation whose kernel concentrates at
one-sample success of order \(1/k\). Interpreting that scale as task-level
localization additionally requires a regular, nonvanishing density--response
factor near zero. A signed-moment representation yields diagnostic shape
restrictions, while a short appendix records exact discrete refinements of the
existing multi-configuration allocation formulation. No language model is
trained, and no model query is used as an experimental measurement: the
contribution is a conditional theory of an established empirical phenomenon,
with assumptions that can be tested in future work.
\end{abstract}

\section{Introduction}

Sampling temperature mediates a familiar quality--exploration trade-off in
language-model inference. For pass@\(k\), only one accepted completion among
\(k\) attempts is needed. Empirically, the temperature that performs best
often increases with the number of attempts. The Codex study explicitly
reported an optimum near \(0.2\) for pass@1 and \(0.8\) for pass@100 in one
model, and plotted a rising upper hull across budgets
\citep{chen2021codex}. Later studies reproduced the qualitative small-budget
low-temperature and large-budget high-temperature pattern across models and
tasks \citep{du2025temperature,chow2025inference}. Recent work also finds that
different temperatures solve different subsets of problems and that a
temperature portfolio can outperform a single temperature
\citep{wu2025role}.

The standard explanation is that a larger budget rewards diversity. That
intuition is useful but incomplete under the conditional-independent model
used to define population pass@\(k\). For a fixed task, pass@\(k\) is a
strictly increasing transformation of its one-sample success probability.
Therefore, the temperature maximizing a fixed task cannot move with \(k\).
\citet{slocum2025diverse} state this observation explicitly and give the
adjacent intuition that low-success tasks receive larger marginal gains at high
budgets. Movement of the benchmark-level optimum requires heterogeneous
temperature responses across tasks, or a violation of conditional i.i.d.
sampling, fixed-within-schedule decoding, or perfect verification.

This paper asks a narrow question:
\begin{quote}
Under what task-level response condition must the aggregate optimal
temperature move weakly upward as the pass@\(k\) budget grows?
\end{quote}
The answer uses a conditional log-success response. At temperature \(t\),
tasks are ordered by current one-sample success \(p_t\). If increasing
temperature has a weakly larger proportional benefit on lower-success tasks,
then the pass@\(k\) marginal shifts monotonically in favor of higher
temperature as \(k\) grows. The shift is not in the raw derivative magnitude;
it is in a normalized derivative whose weighting law is ordered by monotone
likelihood ratio. This distinction prevents a common but false claim that
\(\partial_t\pass@k\) itself must grow with \(k\).

Against that prior background, our contributions are:
\begin{enumerate}
    \item a sufficient-condition theorem giving nested temperature-derivative
    signs for every pair of budgets and, under strict single-peakedness,
    nondecreasing optimal temperatures;
    \item an exact two-stratum affine model with a phase boundary, closed-form
    optimizer, upward and downward regimes, and an explicit large-budget
    limit;
    \item a \(\mathrm{Beta}(2,k)\) kernel representation and its conditional
    localization interpretation, a logit-level interpretation of the
    response-order condition, and signed-moment diagnostics for future
    empirical tests.
\end{enumerate}

The mathematical tools---likelihood-ratio order, covariance inequalities,
monotone comparative statics, and the compact moment problem---are classical
\citep{lehmann1955ordered,karlin1956decision,milgrom1994monotone,
hildebrandt1933moment}. The contribution is their specialization to
inference-time temperature under pass@\(k\), not a new theorem about those
tools. We also do not claim to discover hard-task reweighting: the factor
\(k(1-p)^{k-1}\) and its implications for pass@\(k\) optimization already
appear in recent work \citep{barakat2026passk}. Likewise, optimized mixtures of
temperatures and other configurations are the subject of OSCA
\citep{zhang2025osca}. Our allocation appendix is explicitly a discrete
refinement of that framework.

\section{Population pass@k}
\label{sec:setup}

Let \(X\sim\mu\) denote a benchmark task, let \(I=[\underline t,\overline t]\)
be a compact temperature interval, and let
\[
p_t(X)=\Pr\{\text{one completion is accepted}\mid X,t\}.
\]
Conditional on \((X,t)\), the \(k\) completions are independent and identically
distributed, and the verifier is perfect for the binary success event. Write
\(q_t=1-p_t\). Population pass@\(k\) is
\begin{equation}
A_k(t)=\E\!\left[1-q_t(X)^k\right].
\label{eq:Ak}
\end{equation}
This is the population target estimated by the usual unbiased finite-sample
pass@\(k\) estimator \citep{chen2021codex}.

For the differential results, fix an interior operating point \(t\). Assume
that \(s\mapsto p_s(X)\) is differentiable almost surely in a neighborhood of
\(t\), that \(\E|\dot p_t(X)|<\infty\), that differentiation may pass through
the expectation, and that \(0<p_t(X)<1\) almost surely. We state the
differential theorems only at such points. Boundary cases require separate
one-sided derivatives and dominated-limit conditions. We use a dot for
\(\partial_t\).

\subsection{Background: fixed-task budget invariance}

The following elementary fact was stated explicitly by
\citet[Appendix~C]{slocum2025diverse}. We include it as background because it
separates a per-task identity from the population comparative static developed
below.

\begin{proposition}[Background budget invariance for a fixed task]
\label{prop:nogo}
For every fixed task \(x\) and every integer \(k\geq1\),
\[
\argmax_{t\in I}\{1-(1-p_t(x))^k\}
=\argmax_{t\in I}p_t(x).
\]
More generally, if two temperatures \(s,t\) satisfy
\(p_s(X)\geq p_t(X)\) almost surely, then
\(A_k(s)\geq A_k(t)\) for every \(k\).
\end{proposition}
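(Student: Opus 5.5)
The first claim follows from strict monotonicity of a scalar map, and the second from pointwise monotonicity combined with monotonicity of expectation. No differentiability or interior assumptions from the setup are needed, since the statement concerns only values on \(I\).

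For the first claim, fix \(x\) and \(k\geq1\), and define \(\phi_k(u)=1-(1-u)^k\) on \([0,1]\). Its derivative is \(k(1-u)^{k-1}>0\) on \([0,1)\), and \(\phi_k\) is continuous on \([0,1]\). Hence \(\phi_k\) is strictly increasing on \([0,1]\). For a strictly increasing \(\phi\) and any function \(f:I\to[0,1]\), we have \(\phi(f(s))\ge\phi(f(t))\) if and only if \(f(s)\ge f(t)\). Taking \(f(t)=p_t(x)\), a point \(t^\ast\) satisfies \(\phi_k(p_{t^\ast}(x))\ge\phi_k(p_t(x))\) for all \(t\in I\) if and only if \(p_{t^\ast}(x)\ge p_t(x)\) for all \(t\in I\). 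Therefore the two argmax sets coincide as sets, including when they are empty or contain several points. I will state this explicitly, so the claim does not rely on the maximizer existing or being unique. The case \(k=1\) is trivial, since \(\phi_1\) is the identity. The only detail to check is the endpoint \(u=1\), where the derivative vanishes when \(k\geq2\). Strictness there follows directly: if \(u<v\le1\) then \((1-u)^k>(1-v)^k\), because \(a\mapsto a^k\) is strictly increasing on \([0,\infty)\).

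For the second claim, suppose \(p_s(X)\ge p_t(X)\) almost surely. Then \(1-p_s(X)\le 1-p_t(X)\), and both quantities lie in \([0,1]\). It follows that \(q_s(X)^k\le q_t(X)^k\) almost surely, so \(1-q_s(X)^k\ge 1-q_t(X)^k\) almost surely. Both random variables are bounded in \([0,1]\), so their expectations exist. Monotonicity of expectation then gives \(A_k(s)\ge A_k(t)\) by \eqref{eq:Ak}, for every \(k\). The same argument works with strict inequality on a set of positive \(\mu\)-measure, which yields \(A_k(s)>A_k(t)\), but the statement does not require this.

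Neither step is a real obstacle. The only point to handle carefully is interpreting the argmax identity as an equality of sets, so that it holds without assuming a maximizer exists. I will also remark that the second claim is exactly why any \(k\)-dependence of the aggregate optimum must come from tasks whose success probabilities are not ordered uniformly across temperatures. This is the heterogeneity exploited later.
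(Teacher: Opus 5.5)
Your proof is correct and follows the paper's argument: strict monotonicity of \(u\mapsto 1-(1-u)^k\) on \([0,1]\), applied pointwise for the argmax identity, then monotonicity of expectation for the aggregate claim. Your extra care is sound and only makes explicit what the paper's one-line proof leaves implicit: reading the identity as an equality of (possibly empty) sets, and checking strictness at the endpoint \(u=1\).
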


\begin{proof}
For \(k\geq1\), the map \(u\mapsto1-(1-u)^k\) is strictly increasing on
\([0,1]\). Apply it pointwise, and then take expectations for the second
claim.
\end{proof}

Thus a changing aggregate optimum cannot be obtained by saying only that
larger \(k\) rewards repeated draws. Under the model, a fixed task's output
diversity matters for pass@\(k\) only through the total probability mass of
accepted outputs, \(p_t(x)\). Budget dependence requires temperature rankings
to cross across tasks.

\section{A monotone-temperature theorem}
\label{sec:main}

Differentiating \eqref{eq:Ak} gives the known pass@\(k\) weighting identity
\begin{equation}
A_k'(t)=k\,\E\!\left[q_t^{k-1}\dot p_t\right].
\label{eq:derivative}
\end{equation}
Define the pointwise log-success temperature response (a semi-elasticity)
\[
\eta_t(X)=\partial_t\log p_t(X)=\frac{\dot p_t(X)}{p_t(X)}
\]
and let \(P=p_t(X)\). For \(u>0\), define the conditional log-success
response
\begin{equation}
m_t(u)=\frac{\E[\dot p_t(X)\mid P=u]}{u}.
\label{eq:mt}
\end{equation}
This equals \(\E[\eta_t(X)\mid P=u]\) whenever \(\eta_t\) is integrable,
while the displayed definition is well defined under the stated integrability
of \(\dot p_t\).
Set
\[
Z_{k,t}=\E[P(1-P)^{k-1}]
\]
and, when \(Z_{k,t}>0\), define a probability law \(\nu_{k,t}\) on tasks by
\begin{equation}
\frac{\dd\nu_{k,t}}{\dd\mu}(X)
=\frac{p_t(X)q_t(X)^{k-1}}{Z_{k,t}}.
\label{eq:nu}
\end{equation}
Equations \eqref{eq:derivative}--\eqref{eq:nu} imply
\begin{equation}
A_k'(t)=kZ_{k,t}\,\E_{\nu_{k,t}}[m_t(P)].
\label{eq:normalized}
\end{equation}

\begin{assumption}[Decreasing conditional log-success response]
\label{ass:response}
At every interior \(t\) to which the standing differential assumptions
apply, a version of \(m_t(u)\) is nonincreasing in \(u\) on the support of
\(p_t(X)\).
\end{assumption}

This is a hard-task-benefit condition: at the current operating point, raising
temperature has a larger proportional response, or a smaller proportional
cost, on lower-success tasks. It is a restriction on observable task-level
responses, not a consequence of pass@\(k\). The conventional elasticity with
respect to temperature is \(t\,m_t(u)\); at any fixed positive \(t\), this
factor does not change the cross-task order assumed here.

\begin{lemma}[Budget induces a likelihood-ratio tilt]
\label{lem:tilt}
For integers \(\ell>k\),
\[
\frac{\dd\nu_{\ell,t}}{\dd\nu_{k,t}}(P)
=\frac{(1-P)^{\ell-k}}
       {\E_{\nu_{k,t}}[(1-P)^{\ell-k}]}.
\]
The likelihood ratio is nonincreasing in \(P\); hence
\(\nu_{\ell,t}\) is shifted toward lower one-sample success in monotone
likelihood-ratio order.
\end{lemma}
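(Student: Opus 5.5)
The plan is a direct density computation followed by a sign check. Because both \(\nu_{\ell,t}\) and \(\nu_{k,t}\) are defined through densities with respect to the same base law \(\mu\), the Radon--Nikodym chain rule gives
\[
\frac{\dd\nu_{\ell,t}}{\dd\nu_{k,t}}
=\frac{\dd\nu_{\ell,t}/\dd\mu}{\dd\nu_{k,t}/\dd\mu}.
\]
This identity holds \(\nu_{k,t}\)-almost surely on the set where \(\dd\nu_{k,t}/\dd\mu>0\).

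\textbf{Positivity of the base density.} Under the standing assumption \(0<P<1\) almost surely, the density \(P(1-P)^{k-1}/Z_{k,t}\) is strictly positive \(\mu\)-a.s. Hence \(\nu_{\ell,t}\ll\nu_{k,t}\), and the two laws are in fact mutually absolutely continuous. For the same reason \(Z_{k,t}>0\) and \(Z_{\ell,t}>0\), so both tilted laws are well defined.

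\textbf{The ratio.} Substituting \eqref{eq:nu} for both budgets gives
\[
\frac{P(1-P)^{\ell-1}/Z_{\ell,t}}{P(1-P)^{k-1}/Z_{k,t}}
=\frac{Z_{k,t}}{Z_{\ell,t}}\,(1-P)^{\ell-k}.
\]

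\textbf{Identifying the constant.} To rewrite \(Z_{k,t}/Z_{\ell,t}\) in the stated form, compute
\[
\E_{\nu_{k,t}}[(1-P)^{\ell-k}]
=\E_\mu\!\left[\frac{P(1-P)^{k-1}}{Z_{k,t}}(1-P)^{\ell-k}\right]
=\frac{Z_{\ell,t}}{Z_{k,t}}.
\]
This yields the displayed formula. As an alternative check, the normalization \(\E_{\nu_{k,t}}[\cdot]=1\) of any likelihood ratio forces the constant directly.

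\textbf{Monotonicity and the order statement.} Since \(\ell-k\geq1\) and \(1-P\in(0,1)\), the map \(u\mapsto(1-u)^{\ell-k}\) is nonincreasing on \([0,1]\), and in fact strictly decreasing. The normalizing constant is positive, so the likelihood ratio is a nonincreasing function of \(P\). By the definition of monotone likelihood-ratio order (in the sense of \citet{lehmann1955ordered,karlin1956decision}), the law of \(P\) under \(\nu_{\ell,t}\) is therefore smaller than its law under \(\nu_{k,t}\) in likelihood-ratio order, i.e.\ it is shifted toward lower one-sample success.

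\textbf{Main obstacle.} There is no deep difficulty. The only care needed is measure-theoretic: the ratio must be read as a function of \(P\) alone, which holds because both densities depend on \(X\) only through \(p_t(X)\). The ordering must also be stated for the pushforward laws of \(P\) rather than for laws on tasks. I would make that pushforward explicit in one sentence.
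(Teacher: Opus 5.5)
Your proposal is correct and follows the paper's proof: divide the two densities from \eqref{eq:nu}, identify the constant \(Z_{k,t}/Z_{\ell,t}\) as \(1/\E_{\nu_{k,t}}[(1-P)^{\ell-k}]\), and note that \((1-P)^{\ell-k}\) is nonincreasing in \(P\). Your additions are sound refinements of the paper's one-line argument: mutual absolute continuity from \(0<P<1\) almost surely, and stating the order for the pushforward laws of \(P\).
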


\begin{proof}
Divide the two densities in \eqref{eq:nu}. The remaining factor is
\((1-P)^{\ell-k}\), and the denominator normalizes it.
\end{proof}

\begin{theorem}[Nested temperature-derivative signs]
\label{thm:nested}
At any interior \(t\) satisfying the standing differential assumptions and
Assumption~\ref{ass:response}, for integers \(\ell>k\geq1\),
\begin{equation}
\E_{\nu_{\ell,t}}[m_t(P)]
\geq
\E_{\nu_{k,t}}[m_t(P)].
\label{eq:score-order}
\end{equation}
Consequently,
\begin{equation}
A_k'(t)\geq0\quad\Longrightarrow\quad A_\ell'(t)\geq0.
\label{eq:sign-nesting}
\end{equation}
For adjacent budgets, the exact increment of the normalized score is
\begin{equation}
\E_{\nu_{k+1,t}}m_t-\E_{\nu_{k,t}}m_t
=
\frac{\operatorname{Cov}_{\nu_{k,t}}
      (m_t(P),1-P)}
     {\E_{\nu_{k,t}}[1-P]}
\geq0.
\label{eq:covariance}
\end{equation}
\end{theorem}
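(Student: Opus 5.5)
The plan is to reduce everything to a one-dimensional covariance inequality in the variable \(P=p_t(X)\). By Lemma~\ref{lem:tilt}, for \(\ell>k\) the law \(\nu_{\ell,t}\) has density proportional to \(g(P)=(1-P)^{\ell-k}\) with respect to \(\nu_{k,t}\). Hence
\[
\E_{\nu_{\ell,t}}[m_t(P)]-\E_{\nu_{k,t}}[m_t(P)]
=\frac{\operatorname{Cov}_{\nu_{k,t}}\bigl(m_t(P),g(P)\bigr)}{\E_{\nu_{k,t}}[g(P)]}.
\]
Both \(m_t\) (by Assumption~\ref{ass:response}) and \(g\) are nonincreasing functions of the single variable \(P\). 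Chebyshev's covariance inequality (the Harris/FKG inequality on the line) then gives a nonnegative covariance. I would prove that inequality directly by symmetrization: take an independent copy \(P'\sim\nu_{k,t}\) and write
\[
2\operatorname{Cov}=\E\bigl[(m_t(P)-m_t(P'))(g(P)-g(P'))\bigr]\geq0,
\]
because the integrand is pointwise nonnegative when both factors are comonotone. This yields \eqref{eq:score-order}.

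For the adjacent-budget identity \eqref{eq:covariance}, specialize to \(\ell=k+1\), so that \(g(P)=1-P\). The displayed ratio is then exactly the stated formula, and its sign follows from the same argument.

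For the sign nesting \eqref{eq:sign-nesting}, use the representation \eqref{eq:normalized}:
\[
A_k'(t)=kZ_{k,t}\,\E_{\nu_{k,t}}[m_t(P)].
\]
Since \(0<P<1\) almost surely, \(Z_{k,t}>0\) for every \(k\), so the sign of \(A_k'(t)\) equals the sign of the normalized score. If \(A_k'(t)\geq0\), then \(\E_{\nu_{k,t}}m_t\geq0\). By \eqref{eq:score-order}, \(\E_{\nu_{\ell,t}}m_t\geq0\), and therefore \(A_\ell'(t)\geq0\).

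The main obstacle is integrability and well-definedness rather than the inequality itself. The point is that \(m_t\) is defined only as a conditional expectation divided by \(u\), and it may be unbounded near \(u=0\). I would check finiteness through the weighting:
\[
\E_{\nu_{k,t}}\bigl|m_t(P)\bigr|\leq \E\bigl[|\dot p_t|\,q_t^{k-1}\bigr]/Z_{k,t}<\infty,
\]
since the factor \(P\) in the density cancels the \(1/u\), and \(q_t\leq 1\) together with \(\E|\dot p_t|<\infty\) bounds the numerator. The covariance against a bounded \(g\) is then finite, and the symmetrization argument applies without extra moment conditions. I would also note that the choice of version of \(m_t\) is harmless, since versions agree \(\mu\)-a.e. in \(P\), hence \(\nu_{k,t}\)-a.e.
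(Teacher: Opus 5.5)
Your proposal is correct and follows essentially the paper's proof: the likelihood-ratio tilt from Lemma~\ref{lem:tilt}, nonnegativity of the covariance of two nonincreasing functions of \(P\) via the same symmetrization identity the paper records in its appendix, specialization to \(\ell=k+1\), and sign transfer through \eqref{eq:normalized} using \(kZ_{k,t}>0\). Your explicit integrability bound \(\E_{\nu_{k,t}}|m_t(P)|\le\E[|\dot p_t|\,q_t^{k-1}]/Z_{k,t}<\infty\) and your remark that the choice of version of \(m_t\) is harmless are useful details that the paper leaves implicit.
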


\begin{proof}
By Lemma~\ref{lem:tilt},
\[
\E_{\nu_{\ell,t}}m_t
=
\frac{\E_{\nu_{k,t}}
[m_t(P)(1-P)^{\ell-k}]}
{\E_{\nu_{k,t}}[(1-P)^{\ell-k}]}.
\]
Both \(m_t(P)\) and \((1-P)^{\ell-k}\) are nonincreasing functions of
\(P\). Their covariance is nonnegative by the elementary association
inequality for comonotone functions, which proves
\eqref{eq:score-order}. Equation \eqref{eq:normalized} then gives
\eqref{eq:sign-nesting}. Taking \(\ell=k+1\) yields
\eqref{eq:covariance}.
\end{proof}

\begin{remark}[What is and is not monotone]
The theorem orders the normalized derivative
\(A_k'(t)/(kZ_{k,t})\). It does not claim that the raw derivatives
\(A_k'(t)\) increase in magnitude with \(k\); the positive normalizer can
shrink rapidly.
\end{remark}

\begin{definition}[Strict derivative single-peakedness]
\label{def:singlepeak}
A differentiable function \(f\) on \(I\) is strictly single-peaked if it has
a unique maximizer \(t^\star\), \(f'(t)>0\) for
\(t<t^\star\), and \(f'(t)<0\) for \(t>t^\star\), using one-sided
derivatives at the boundary.
\end{definition}

\begin{corollary}[Budget-monotone optimal temperature]
\label{cor:argmax}
Suppose the standing differential assumptions and
Assumption~\ref{ass:response} hold at every interior \(t\in I\), and every
\(A_k\) is strictly single-peaked with unique maximizer \(t_k\). Then
\[
t_\ell\geq t_k\qquad\text{whenever }\ell>k.
\]
\end{corollary}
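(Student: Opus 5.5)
We argue by contradiction, using the sign nesting \eqref{eq:sign-nesting} of Theorem~\ref{thm:nested} as the only substantive input. Fix \(\ell>k\) and suppose \(t_\ell<t_k\). Since \(t_\ell<t_k\le\overline t\), the point \(t_\ell\) is not the right endpoint. We pick any temperature \(s\) with \(t_\ell<s<t_k\). Such an \(s\) is interior to \(I\): it lies strictly above \(t_\ell\ge\underline t\) and strictly below \(t_k\le\overline t\). Hence the standing differential assumptions and Assumption~\ref{ass:response} hold at \(s\), and Theorem~\ref{thm:nested} applies there.

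Next we read off derivative signs at \(s\) from strict single-peakedness (Definition~\ref{def:singlepeak}). Because \(s<t_k\), single-peakedness of \(A_k\) gives \(A_k'(s)>0\). Because \(s>t_\ell\), single-peakedness of \(A_\ell\) gives \(A_\ell'(s)<0\). But \eqref{eq:sign-nesting} states that \(A_k'(s)\ge0\) implies \(A_\ell'(s)\ge0\), which is a contradiction. Therefore \(t_\ell\ge t_k\).

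The main care point is making sure the comparison point is interior, so that the theorem's hypotheses are available there. Choosing \(s\) strictly between the two maximizers handles this automatically, even when \(t_k\) or \(t_\ell\) sits at an endpoint of \(I\). Boundary one-sided derivatives are then never needed.

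A second point is that the argument needs only the weak implication in \eqref{eq:sign-nesting}, combined with the strict inequalities from single-peakedness. No strict version of Theorem~\ref{thm:nested} is required. One more check: \(Z_{k,s}>0\), needed for \eqref{eq:normalized}, holds because \(0<p_s(X)<1\) almost surely.
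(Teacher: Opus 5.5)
Your proof is correct and uses the same ingredients as the paper: the sign nesting \eqref{eq:sign-nesting} from Theorem~\ref{thm:nested}, combined with the strict derivative signs of single-peakedness. The only difference is where you compare. The paper evaluates at \(t_k\) itself, using \(A_k'(t_k)=0\) when \(t_k\) is interior, and treats \(t_k=\underline t\) and \(t_k=\overline t\) as separate cases. You evaluate at an interior point \(s\) strictly between \(t_\ell\) and \(t_k\), which handles all endpoint cases at once and needs neither the first-order condition nor one-sided derivatives.
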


\begin{proof}
If \(t_k\) is interior, then \(A_k'(t_k)=0\), so
Theorem~\ref{thm:nested} gives \(A_\ell'(t_k)\geq0\).
Strict single-peakedness of \(A_\ell\) rules out \(t_\ell<t_k\).
If \(t_k=\underline t\), the conclusion is immediate. If
\(t_k=\overline t\), then \(A_k'(t)>0\) throughout the interior;
sign nesting gives \(A_\ell'(t)\geq0\) there, forcing
\(t_\ell=\overline t\).
\end{proof}

Neither Assumption~\ref{ass:response} nor single-peakedness is automatic.
Without them, the optimal temperature may decrease, oscillate, or be
nonunique. The result is therefore a falsifiable sufficient condition, not a
universal law.

\section{A Beta-kernel representation of the marginal decision}
\label{sec:localization}

Suppose \(P=p_t(X)\) has a density \(f_t\) on \((0,1)\). Conditioning
\eqref{eq:derivative} on \(P\) gives
\[
A_k'(t)
=k\int_0^1u(1-u)^{k-1}m_t(u)f_t(u)\dd u.
\]
The density of \(Z_k\sim\mathrm{Beta}(2,k)\) is
\(k(k+1)u(1-u)^{k-1}\), so
\begin{equation}
A_k'(t)=\frac{1}{k+1}
\E\!\left[m_t(Z_k)f_t(Z_k)\right].
\label{eq:beta}
\end{equation}
The kernel itself has mode \(1/k\) for \(k>1\), mean \(2/(k+2)\), and
concentrates on one-sample success probabilities of order \(1/k\). Equation
\eqref{eq:beta} averages the density--response factor
\(h_t(u)=m_t(u)f_t(u)\) against that kernel. Kernel concentration alone does
not imply that the nonzero task-level contribution is centered at the same
scale. Such an interpretation is justified, for example, when \(h_t\) is
continuous with a finite nonzero limit near zero. If \(h_t\) vanishes or varies
rapidly there, or if the success distribution is supported away from zero, the
nonzero contribution can come from a different scale. For a finite benchmark,
the density representation itself is only an approximation or analogy.

The same tilt underlies aggregate inference-scaling laws. A mixture of
taskwise exponential failure curves can exhibit much slower aggregate
decay when the success distribution has substantial mass near zero
\citep{schaeffer2025monkeys}. Equation \eqref{eq:beta} asks a different,
local question: which part of that distribution controls the response to a
change in temperature?

\section{An exact two-stratum phase diagram}
\label{sec:two}

Consider an easy stratum of mass \(\pi\in(0,1)\) and a hard stratum of
mass \(1-\pi\), with affine temperature responses
\begin{equation}
p_E(t)=e-at,\qquad p_H(t)=h+bt,
\label{eq:linear}
\end{equation}
where \(a,b>0\) and both probabilities remain in \((0,1)\) on \(I\).
Put \(A=1-e\), \(B=1-h\), and
\begin{equation}
C=\frac{\pi a}{(1-\pi)b}.
\label{eq:C}
\end{equation}
The aggregate failure probability is
\[
F_k(t)=1-A_k(t)
=\pi(A+at)^k+(1-\pi)(B-bt)^k.
\]

\begin{proposition}[Closed-form optimizer and phase boundary]
\label{prop:twotype}
For \(k=1\), a maximizer is \(\underline t\) when \(C>1\),
\(\overline t\) when \(C<1\), and every \(t\in I\) when \(C=1\).
For \(k>1\), \(A_k\) is strictly concave and has the unique maximizer
\begin{equation}
t_k=
\clip_I\!\left(
\frac{B-r_kA}{b+r_ka}
\right),
\qquad
r_k=C^{1/(k-1)}.
\label{eq:twotype-opt}
\end{equation}
If \(C>1\), the sequence \((t_k)\) is nondecreasing; if \(C<1\), it is
nonincreasing. If \(C=1\), it is constant for \(k>1\). In every case,
\begin{equation}
\lim_{k\to\infty}t_k
=\clip_I\!\left(\frac{B-A}{a+b}\right),
\label{eq:twotype-limit}
\end{equation}
the temperature at which the two strata have equal one-sample success,
clipped to \(I\).
\end{proposition}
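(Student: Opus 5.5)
The plan is to work with the aggregate failure probability \(F_k=1-A_k\), since maximizing \(A_k\) is the same as minimizing \(F_k\). On \(I\) both bases \(A+at=1-p_E(t)\) and \(B-bt=1-p_H(t)\) lie in \((0,1)\); in particular they are strictly positive. That positivity is the only structural fact needed.

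\emph{The case \(k=1\).} Here \(F_1\) is affine, with slope \(\pi a-(1-\pi)b=(1-\pi)b\,(C-1)\). If \(C>1\), then \(F_1\) is increasing, so \(A_1\) is maximized at \(\underline t\). If \(C<1\), the maximizer is \(\overline t\). If \(C=1\), \(F_1\) is constant and every \(t\in I\) is a maximizer.

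\emph{The case \(k>1\): concavity and the clipped stationary point.} Differentiating twice gives
\[
F_k''(t)=k(k-1)\bigl[\pi a^2(A+at)^{k-2}+(1-\pi)b^2(B-bt)^{k-2}\bigr],
\]
which is strictly positive on \(I\). Hence \(A_k\) is strictly concave and has a unique maximizer. To identify it, I will avoid solving \(F_k'=0\) on all of \(\R\), because the formula can fall outside the region where the bases are positive. Instead I compare signs directly on \(I\). The derivative is
\[
F_k'(t)=k\bigl[\pi a(A+at)^{k-1}-(1-\pi)b(B-bt)^{k-1}\bigr].
\]
Dividing by \((1-\pi)b>0\), its sign equals the sign of \(C(A+at)^{k-1}-(B-bt)^{k-1}\). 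Both bases are positive and \(x\mapsto x^{k-1}\) is strictly increasing on \((0,\infty)\). So this sign equals the sign of \(r_k(A+at)-(B-bt)=(b+r_ka)t-(B-r_kA)\). That expression is affine and strictly increasing in \(t\), with root \(\tau_k=(B-r_kA)/(b+r_ka)\). Therefore \(F_k\) decreases on \(I\) to the left of \(\tau_k\) and increases to the right of it. Its minimizer on \(I\), equivalently the maximizer of \(A_k\), is \(\clip_I(\tau_k)\), which is \eqref{eq:twotype-opt}. I expect this step to be the main subtle point: the argument must work through a sign comparison on \(I\) rather than an unconstrained first-order condition, so that clipping is justified even when \(\tau_k\notin I\).

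\emph{Monotonicity and the limit.} View \(\tau\) as a function of \(r>0\). A direct quotient-rule computation gives
\[
\frac{\dd}{\dd r}\,\frac{B-rA}{b+ra}=-\frac{Ab+aB}{(b+ra)^2}<0,
\]
since \(A,B,a,b>0\). Hence \(\tau\) is strictly decreasing in \(r\). Now \(r_k=C^{1/(k-1)}\):
\begin{itemize}
\item if \(C>1\), then \(r_k\) decreases in \(k\), so \(\tau_k\) increases;
\item if \(C<1\), then \(r_k\) increases in \(k\), so \(\tau_k\) decreases;
\item if \(C=1\), then \(r_k\equiv1\) and \(\tau_k\) is constant.
\end{itemize}
Since \(\clip_I\) is nondecreasing, these orderings transfer to \(t_k\), giving weak monotonicity in each regime. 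In every case \(r_k\to1\) as \(k\to\infty\). Continuity of \(\tau\) in \(r\) and of \(\clip_I\) then gives \eqref{eq:twotype-limit}. Finally, \((B-A)/(a+b)\) is exactly the solution of \(A+at=B-bt\), which is the temperature at which the two strata have equal one-sample success.
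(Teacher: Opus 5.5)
Your proof is correct and follows essentially the same route as the paper: the affine slope for \(k=1\), positivity of \(F_k''\) on \(I\), the ratio equation \((B-bt)/(A+at)=r_k\), \(T'(r)<0\), and \(r_k\to1\). Your direct sign comparison of \(r_k(A+at)-(B-bt)\) on \(I\) is a tighter justification of clipping than the paper's ``unique unconstrained critical point plus concavity,'' which glosses over the fact that for odd \(k\) concavity of \(A_k\) is only guaranteed where both bases are positive and \(A_k'\) can have a second real root. One small repair: \(A,B>0\) does not follow from the hypotheses unless \(0\in I\), but you only need \(Ab+aB>0\), and this holds because \(Ab+aB=b(A+at)+a(B-bt)\) for every \(t\), with both bases positive on \(I\) (the paper's proof uses the same step without comment).
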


\begin{proof}
For \(k=1\),
\[
A_1'(t)=(1-\pi)b-\pi a,
\]
which gives the boundary cases through \eqref{eq:C}. For \(k>1\),
\[
A_k''(t)
=-k(k-1)\!\left[
\pi a^2(A+at)^{k-2}
+(1-\pi)b^2(B-bt)^{k-2}
\right]<0.
\]
The unique unconstrained critical point solves
\[
\frac{B-bt}{A+at}
=\left(\frac{\pi a}{(1-\pi)b}\right)^{1/(k-1)}
=r_k,
\]
which yields \eqref{eq:twotype-opt}; concavity justifies clipping.
Writing \(T(r)=(B-rA)/(b+ra)\), we have
\[
T'(r)=-\frac{Ab+aB}{(b+ra)^2}<0.
\]
When \(C>1\), \(r_k\downarrow1\), so \(T(r_k)\) increases; when
\(C<1\), \(r_k\uparrow1\), so it decreases. Clipping preserves either
order. Taking \(r_k\to1\) proves \eqref{eq:twotype-limit}.
\end{proof}

The ratio \(C\) is a one-shot phase boundary. When \(C>1\), the
population-weighted easy-stratum loss from raising temperature dominates at
\(k=1\), so the optimum starts cold. As \(k\) increases, failures on the
hard stratum receive greater marginal weight and the optimum moves toward the
difficulty-crossing temperature. When \(C<1\), the direction reverses.

For the equal-mass family
\[
p_E(t)=0.60-0.30t,\qquad
p_H(t)=0.25+0.15t,\qquad t\in[0,1],
\]
\(C=2\), and \eqref{eq:twotype-opt} gives
\[
t_1=t_2=0,\quad
t_3\approx0.321,\quad
t_5\approx0.541,\quad
t_{10}\approx0.671,\quad
t_\infty=\frac79.
\]
The reverse family
\[
\pi=\frac15,\quad
p_E(t)=\frac7{10}-\frac35t,\quad
p_H(t)=\frac15+\frac35t,\quad t\in[0,1]
\]
has \(C=1/4\), keeps all probabilities strictly between zero and one, and
\[
t_1=1,\qquad t_2=\frac{29}{30},\qquad
t_3=\frac{13}{18},\qquad t_\infty=\frac5{12}.
\]
This explicit downward sequence refutes any unconditional claim that the
optimal temperature must rise with budget. The original easy/hard ordering
reverses at \(t=5/12\), so Assumption~\ref{ass:response} is not maintained
over the relevant region.

\begin{figure}[t]
    \centering
    \includegraphics[width=\textwidth]{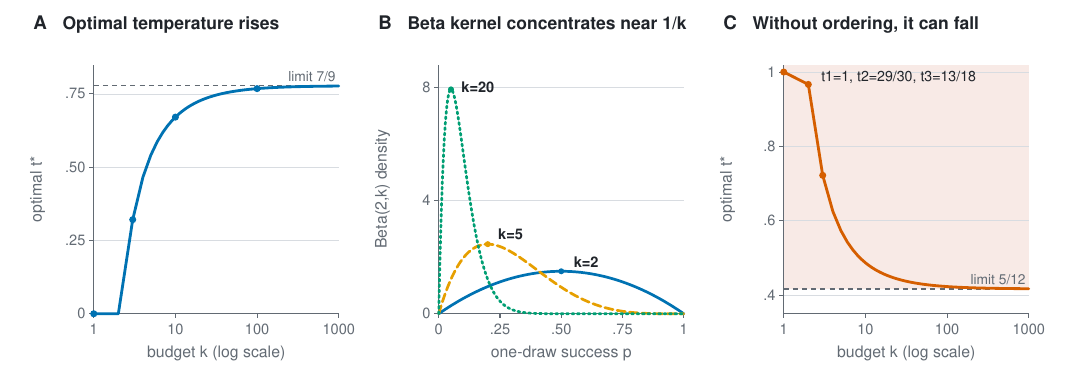}
    \caption{Analytical consequences of the theory. Left: the exact
    upward-regime optimizer from Proposition~\ref{prop:twotype}. Center:
    \(\mathrm{Beta}(2,k)\) kernels whose own mass shifts to success probabilities
    of order \(1/k\); task-level influence also depends on the density--response
    factor. Right: the valid downward-regime affine family, showing that upward
    movement is conditional rather than algebraic.}
    \label{fig:theory}
\end{figure}

\section{A completion-level interpretation}
\label{sec:micro}

Let \(\pi_t(y\mid x)\) be a probability mass function on a fixed countable
support, let \(C_x\) be the set of accepted completions, and assume that
\(t\mapsto\pi_t(\cdot\mid x)\) is differentiable at the operating point as a
map into \(\ell^1\). Also assume \(p_t(x)>0\) and
\(\pi_t(y\mid x)>0\) on the declared support. Summation over \(C_x\) is a
bounded linear functional on \(\ell^1\), so differentiation passes through the
accepted-set sum; the \(\ell^1\) derivative also makes the score below
integrable, both unconditionally and conditional on \(Y\in C_x\). Then
\[
p_t(x)=\sum_{y\in C_x}\pi_t(y\mid x)
\]
and the score identity gives
\begin{equation}
\partial_t\log p_t(x)
=
\E_{\pi_t}\!\left[
\partial_t\log\pi_t(Y\mid x)
\mid Y\in C_x
\right].
\label{eq:score-id}
\end{equation}
For the idealized global Gibbs family at \(t>0\), let the score \(s_x\) be
independent of \(t\) and write
\[
\pi_t(y\mid x)=\frac{\exp(s_x(y)/t)}{Z_x(t)},
\qquad
Z_x(t)=\sum_y\exp(s_x(y)/t).
\]
Assume that \(Z_x(t)\) is finite and differentiable in a neighborhood of the
operating point, that differentiation may pass through the defining sums, and
that \(\E_t|s_x(Y)|\) and
\(\E_t[|s_x(Y)|\mid Y\in C_x]\) are finite. Then
\begin{equation}
\partial_t\log p_t(x)
=
\frac{
\E_t[s_x(Y)]
-\E_t[s_x(Y)\mid Y\in C_x]
}{t^2}.
\label{eq:gibbs}
\end{equation}
Higher temperature therefore helps when accepted completions occupy
lower-score modes than the current global average, and hurts when accepted
completions already occupy the top modes.

Autoregressive tokenwise temperature has an analogous identity when the summed
token score is integrable and differentiation may pass through the path sum.
The derivative
of a completion log probability is the sum, over prefixes, of the
chosen-token logit minus the prefix-average logit, with the appropriate
\(-1/t^2\) sign. Averaging that score over accepted paths recovers
\eqref{eq:score-id}. Assumption~\ref{ass:response} therefore says, roughly,
that accepted paths for currently low-success tasks are deeper in the model's
score landscape than accepted paths for currently high-success tasks. This is
a mechanism that can generate the condition, not a proof that real models
always satisfy it.

\section{Diagnostics for future tests}
\label{sec:diagnostics}

Fix \(t\), assume \(\E|\dot p_t|<\infty\), and define a finite signed
measure on \([0,1]\) by
\[
\sigma_t(B)
=
\E\!\left[
\dot p_t(X)\,\1\{q_t(X)\in B\}
\right].
\]
Equation \eqref{eq:derivative} implies
\begin{equation}
g_j(t):=\frac{A_{j+1}'(t)}{j+1}
=\int_0^1q^j\dd\sigma_t(q),
\qquad j=0,1,\ldots.
\label{eq:moments}
\end{equation}

\begin{proposition}[Signed-moment identification and a shape test]
\label{prop:moments}
The infinite exact sequence \(\{g_j(t)\}_{j\geq0}\) uniquely determines
\(\sigma_t\). If \(\dot p_t(X)\geq0\) almost surely, then for all integers
\(j,r\geq0\),
\begin{equation}
(-1)^r\Delta^r g_j(t)
=
\E[q_t^j p_t^r\dot p_t]\geq0.
\label{eq:complete}
\end{equation}
\end{proposition}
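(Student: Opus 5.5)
The proof has two parts: uniqueness of \(\sigma_t\) from its moments, and the sign identity \eqref{eq:complete}. Both rest on the moment representation \eqref{eq:moments}.

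I first confirm \eqref{eq:moments} directly. By \eqref{eq:derivative}, \(A_{j+1}'(t)/(j+1)=\E[q_t^j\dot p_t]\). Since \(|q_t^j\dot p_t|\le|\dot p_t|\) and \(\E|\dot p_t|<\infty\), the pushforward formula for the signed measure \(\sigma_t\) gives \(\E[q_t^j\dot p_t]=\int_0^1 q^j\dd\sigma_t(q)\). Here \(\sigma_t\) is finite, with total variation at most \(\E|\dot p_t|\), and it is supported on the compact set \([0,1]\).

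For uniqueness (the Hausdorff moment problem for signed measures), I split \(\sigma_t=\sigma_t^+-\sigma_t^-\) by Jordan decomposition. Suppose \(\sigma\) and \(\sigma'\) are two finite signed measures on \([0,1]\) with the same moments. Then \(\sigma-\sigma'\) is a finite signed measure that integrates every polynomial to zero. Polynomials are uniformly dense in \(C[0,1]\) by Weierstrass, and integration against a finite signed measure is continuous in the sup norm. So \(\int\phi\dd(\sigma-\sigma')=0\) for every continuous \(\phi\). The Riesz representation theorem, applied through uniqueness of the representing regular measure on a compact metric space where all finite Borel measures are regular, then forces \(\sigma=\sigma'\). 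This step needs only \(\E|\dot p_t|<\infty\), not \(\dot p_t\ge0\).

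For the shape test, I use the forward difference \(\Delta g_j=g_{j+1}-g_j\) and prove by induction on \(r\) that
\[
(-1)^r\Delta^r g_j=\int_0^1 q^j(1-q)^r\dd\sigma_t(q).
\]
The base case \(r=0\) is \eqref{eq:moments}. For the inductive step, linearity gives
\[
\Delta\Big[\int q^j(1-q)^r\dd\sigma_t\Big]=\int q^j(q-1)(1-q)^r\dd\sigma_t=-\int q^j(1-q)^{r+1}\dd\sigma_t,
\]
which supplies the extra factor of \(-1\). Translating back to expectations, with \(1-q_t=p_t\), gives \(\E[q_t^jp_t^r\dot p_t]\). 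When \(\dot p_t\ge0\) almost surely, the integrand \(q_t^jp_t^r\dot p_t\) is nonnegative, so the expectation is \(\ge0\).

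The only delicate point is the uniqueness step. It requires treating a signed rather than positive measure, and the Weierstrass–Riesz argument above handles this directly. The sign claim is a routine induction.
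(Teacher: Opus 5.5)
Your proposal is correct and takes essentially the same route as the paper. Uniqueness comes from Weierstrass density of polynomials in \(C[0,1]\), where you spell out the Riesz-uniqueness step that the paper leaves implicit. The identity comes from each forward difference multiplying the integrand by \(q-1=-p\), which you write as an induction on \(r\).
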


\begin{proof}
If two finite signed measures on a compact interval have the same moments,
they integrate every polynomial equally. Polynomial density in \(C[0,1]\)
then implies equality of the measures. Repeated forward differences in
\eqref{eq:moments} multiply the integrand by \((q-1)^r=(-p)^r\),
which proves \eqref{eq:complete}.
\end{proof}

This is an application of the classical compact-interval moment problem
\citep{hildebrandt1933moment}, not a claim of a new moment theorem. For a
general signed response measure, complete monotonicity need not hold. Moreover,
identification uses the entire infinite, noiseless sequence; inversion from
finitely many noisy derivatives is ill-posed \citep{gerth2021hausdorff}. The
useful output is therefore a set
of diagnostics:
\begin{enumerate}
    \item estimate taskwise \(p_t\) and local log-success responses at nearby
    temperatures;
    \item test whether the conditional response \(m_t(u)\) is nonincreasing,
    preferably with held-out tasks and uncertainty bands;
    \item separately test strict single-peakedness of aggregate
    \(t\mapsto A_k(t)\);
    \item only when both conditions are supported, predict a nondecreasing
    sequence of optimal temperatures;
    \item use violations of \eqref{eq:complete} to reject the stronger
    hypothesis that the temperature increase helps every task.
\end{enumerate}
These are preregisterable predictions for future empirical work. They are not
evidence for the present theorem.

\section{Related work and claim boundaries}
\label{sec:related}

\paragraph{Temperature and multi-sample inference.}
\citet{chen2021codex} documented the increasing optimal-temperature pattern for
code generation. \citet{du2025temperature} studied temperature selection for
majority vote and best-of-\(N\), reporting lower temperatures at small samples,
higher temperatures at larger samples, and roughly single-peaked curves. They
also propose an entropy-based selector. \citet{chow2025inference} reported
temperature/sample co-scaling and an easy/hard-task interpretation.
\citet[Appendix~C]{slocum2025diverse} explicitly state both
fixed-task budget invariance and the qualitative asymmetry by which low-success
tasks can matter more at high budgets. Their variance interpretation does not
follow from concavity alone: a mean-preserving spread weakly lowers
the expectation of a concave function. Our response-order assumption instead
states the directional task heterogeneity needed for a rigorous ordering.
\citet{dang2025ensembling} formalize a bias--variance view of pass@\(k\) and
show that temperature scaling trades bias against variance. These works
establish the empirical target and neighboring mechanisms; our
contribution is the conditional derivative-sign and optimizer ordering, not
discovery of the pattern, fixed-task invariance, or its first qualitative
explanation.

\paragraph{Pass@\(k\) reweighting and inference scaling.}
\citet{barakat2026passk} derive the same
\(k(1-p)^{k-1}\) prompt weight in policy gradients and study how hard-prompt
emphasis can conflict with pass@1 optimization.
\citet{walder2025pkpo} directly optimize pass@\(k\), derive gradient
estimators, and report that larger target budgets prioritize harder problems.
\citet{schaeffer2025monkeys} use the distribution of one-attempt success to
explain aggregate power-law inference scaling. We specialize the
distributional view to a scalar inference-time control and ask when its
maximizers are ordered across budgets.

\paragraph{Configuration portfolios.}
\citet{zhang2025osca} formulate mixed allocation across configurations,
including temperature, model, and language; establish the convex relaxed
failure objective; and show that mixtures can beat one configuration.
\citet{wu2025role} empirically demonstrate complementary temperature-specific
task subsets. \citet{su2026adaptive} learn prompt- and budget-conditioned
decoding policies, a broader adaptive setting than the fixed-temperature
schedules studied here. Appendix~\ref{app:allocation} starts from the OSCA
objective and records exact integer endpoint and finite-type structural
consequences. We do not claim a new convex formulation or the first benefit of
mixing or budget-conditioned decoding.

\paragraph{Priority statement.}
\citet{slocum2025diverse} own the fixed-task invariance observation and a
qualitative hard/easy-task account of aggregate movement. To our knowledge, a
targeted primary-source search through July 19, 2026 did not locate a prior
result combining decreasing conditional log-success response with
derivative-sign nesting and ordered inference-time temperature maximizers. We
therefore make only that narrow sufficient-condition claim. The result is a
pass@\(k\)-temperature specialization of classical likelihood-ratio
comparative statics; it is not the first theory of pass@\(k\), the first
fixed-task no-go observation, the first hard-task explanation, a new general
monotone-comparative-statics theorem, or a universal law.

\section{Limitations and conclusion}

The analysis makes four substantive modeling commitments. Samples are
conditionally i.i.d.; the temperature is held fixed within each homogeneous
schedule; the verifier exactly recognizes success; and the benchmark average
is the population of interest. Correlation among attempts, adaptive decoding,
verifier error, or distribution shift can all change the objective. The
response order and single-peakedness must be checked rather than assumed.
Finite benchmarks can also produce grid and estimation noise that masks or
reverses the population ordering.

Within those boundaries, the result formalizes a precise sufficient condition
beyond the prior fixed-task observation. Pass@\(k\) does not make a fixed task
prefer a different temperature. It changes the benchmark-level marginal by a
power tilt toward tasks that still tend to fail.
When lower-success tasks have larger conditional log-success responses, that
tilt creates nested derivative signs; with single-peaked curves, the best
temperature can only move upward. The two-stratum phase diagram shows both why
the familiar pattern emerges and how it can reverse. This turns a broad
quality--diversity intuition into a testable conditional statement.

\section*{Tool-use disclosure}

OpenAI Codex was used to assist with literature search, algebraic and numerical
checks, figure generation, and manuscript drafting. It is not an author. The
named human author is responsible for independently verifying the claims,
references, and final submitted text and for taking responsibility for the
work.

\bibliographystyle{plainnat}
\bibliography{references}

\appendix

\section{Exact refinements of configuration allocation}
\label{app:allocation}

This appendix begins from the mixed-configuration failure objective studied by
OSCA \citep{zhang2025osca}. Its purpose is to make two discrete and asymptotic
consequences explicit, not to reclaim the general formulation.

\subsection{An exact integer criterion for two configurations}

Let \(q_A(X)\) and \(q_B(X)\) be per-sample failure probabilities for two
configurations, and write \(p_A=1-q_A\) and \(p_B=1-q_B\). Assume that,
conditional on \(X\), draws are independent across and within configurations.
With \(n\) samples from \(B\) and \(k-n\) from \(A\), define
\begin{equation}
F_k(n)=\E[q_A(X)^{k-n}q_B(X)^n],
\qquad n=0,\ldots,k.
\label{eq:Fn}
\end{equation}

\begin{proposition}[Discrete convexity and strict-mixing criterion]
\label{prop:mixing}
For \(n=0,\ldots,k-2\),
\begin{equation}
F_k(n+2)-2F_k(n+1)+F_k(n)
=
\E\!\left[
q_A^{k-n-2}q_B^n(q_A-q_B)^2
\right]\geq0.
\label{eq:discrete-convex}
\end{equation}
For \(k\geq2\), some interior integer schedule strictly beats both homogeneous
endpoints if and only if
\begin{align}
\E[q_A^{k-1}(p_A-p_B)]&<0,
\label{eq:left-gate}\\
\E[q_B^{k-1}(p_A-p_B)]&>0.
\label{eq:right-gate}
\end{align}
\end{proposition}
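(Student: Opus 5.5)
The plan is to prove the discrete convexity identity first and then read the strict-mixing criterion off it.

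\emph{Convexity identity.} Fix \(X\) and factor out \(q_A^{k-n-2}q_B^n\) from the three terms:
\[
q_A^{k-n}q_B^n-2q_A^{k-n-1}q_B^{n+1}+q_A^{k-n-2}q_B^{n+2}
=q_A^{k-n-2}q_B^n\bigl(q_A^2-2q_Aq_B+q_B^2\bigr)
=q_A^{k-n-2}q_B^n(q_A-q_B)^2 .
\]
Taking expectations gives \eqref{eq:discrete-convex}. Every factor is nonnegative, so the right-hand side is \(\geq0\). Hence \(n\mapsto F_k(n)\) is a convex sequence on \(\{0,\ldots,k\}\), and its forward differences \(D(n)=F_k(n+1)-F_k(n)\) are nondecreasing in \(n\).

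\emph{Endpoint slopes.} Next I compute the two extreme differences. Since \(q_B-q_A=p_A-p_B\),
\[
D(0)=\E[q_A^{k-1}(q_B-q_A)]=\E[q_A^{k-1}(p_A-p_B)],
\]
\[
D(k-1)=\E[q_B^{k-1}(q_B-q_A)]=\E[q_B^{k-1}(p_A-p_B)].
\]
So \eqref{eq:left-gate} says \(D(0)<0\), meaning the first step into the mix strictly lowers failure. Likewise \eqref{eq:right-gate} says \(D(k-1)>0\), meaning the last step into \(B\) strictly raises failure. A schedule strictly beats a homogeneous endpoint when its failure \(F_k(n)\) is strictly smaller than that endpoint's.

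\emph{Sufficiency.} Suppose \(D(0)<0\) and \(D(k-1)>0\). Then \(F_k(1)<F_k(0)\) and \(F_k(k-1)<F_k(k)\). Let \(n^\ast\in\{1,\ldots,k-1\}\) minimize \(F_k\) over the interior schedules (this set is nonempty since \(k\geq2\)). Then \(F_k(n^\ast)\leq F_k(1)<F_k(0)\) and \(F_k(n^\ast)\leq F_k(k-1)<F_k(k)\), so \(n^\ast\) strictly beats both endpoints.

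\emph{Necessity.} Suppose instead \(D(0)\geq0\). Because \(D\) is nondecreasing, \(D(n)\geq0\) for all \(n\), so \(F_k\) is nondecreasing. Then every interior \(n\) has \(F_k(n)\geq F_k(0)\) and cannot strictly beat endpoint \(0\). Symmetrically, if \(D(k-1)\leq0\), then all \(D(n)\leq0\), so \(F_k\) is nonincreasing. Then \(F_k(n)\geq F_k(k)\) for every \(n\), and no interior schedule strictly beats endpoint \(k\).

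The only delicate point is reading ``strictly beats both'' as strict inequality against each endpoint separately. The monotonicity of \(D\) then makes both gates necessary. Integrability is automatic because all quantities lie in \([0,1]\).
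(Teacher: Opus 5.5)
Your proof is correct and follows the paper's own route. You use the same factorization for the second-difference identity, identify the same two endpoint forward differences with the gates, and close the equivalence with monotonicity of the first differences of \(n\mapsto F_k(n)\); your explicit sufficiency and necessity cases just spell out what the paper compresses into its final sentence about convex sequences.
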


\begin{proof}
Expanding the second difference of \eqref{eq:Fn} gives
\eqref{eq:discrete-convex}. Hence the first differences form a
nondecreasing sequence. The left endpoint is not optimal exactly when
\[
F_k(1)-F_k(0)
=\E[q_A^{k-1}(q_B-q_A)]
=\E[q_A^{k-1}(p_A-p_B)]<0.
\]
Likewise, the right endpoint is not optimal exactly when
\[
F_k(k)-F_k(k-1)
=\E[q_B^{k-1}(p_A-p_B)]>0.
\]
For a convex sequence, both strict endpoint conditions hold if and only if
the minimum is attained strictly below both endpoints at an interior index.
\end{proof}

\subsection{Finite task types and the large-budget limit}

Suppose there are \(r\) task types with masses \(w_i>0\) summing to one, and
configurations \(j=1,\ldots,m\) have failure probabilities
\(q_{ji}\in(0,1]\).
Let \(a_j\in\R_+^r\) have coordinates
\(a_{ji}=-\log q_{ji}\). For relaxed allocation proportions
\(z\) in the simplex \(\Delta_m\),
\begin{equation}
F_k(z)
=\sum_{i=1}^r w_i
\exp\!\left(-k\sum_{j=1}^m z_ja_{ji}\right).
\label{eq:hazard-objective}
\end{equation}

\begin{proposition}[Sparse support, dominance, and maximin limit]
\label{prop:allocation-structure}
There exists an optimizer of \eqref{eq:hazard-objective} supported on at most
\(r\) configurations. A configuration whose hazard vector is componentwise
dominated by a convex combination of other hazard vectors can be removed
without worsening the optimum. Finally, with
\[
\phi_k(z)=-\frac1k\log F_k(z),
\qquad
m(z)=\min_i\sum_jz_ja_{ji},
\qquad
w_{\min}=\min_iw_i,
\]
the uniform bound
\begin{equation}
m(z)\leq\phi_k(z)
\leq m(z)+\frac{\log(1/w_{\min})}{k}
\label{eq:maximin-bound}
\end{equation}
holds. Every accumulation point of relaxed optimizers as \(k\to\infty\)
maximizes \(m(z)\).
\end{proposition}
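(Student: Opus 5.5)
The proof splits into three parts, taken in the order stated: sparse support, removal of dominated configurations, and the large-budget maximin limit.

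\textbf{Sparse support.} Write \(F_k(z)=\Phi(Az)\) with \(\Phi(y)=\sum_i w_i e^{-ky_i}\) and \(Az\) the vector \(\bigl(\sum_j z_ja_{ji}\bigr)_{i\le r}\).
\begin{itemize}
\item Existence: \(F_k\) is continuous on the compact simplex, so an optimizer \(z^\star\) exists. The objective depends on \(z\) only through the point \(y^\star=Az^\star\in\R^r\).
\item By Carath\'eodory's theorem, \(y^\star\) lies in the convex hull of \(\{a_j\}\subset\R^r\), so it is a convex combination of at most \(r+1\) hazard vectors.
\item To improve \(r+1\) to \(r\): the set of \(z\in\Delta_m\) with \(Az=y^\star\) is a polytope cut out by \(r\) equalities and \(\sum_j z_j=1\). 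Its vertices have support at most \(r+1\), which alone is not enough.
\item Instead use optimality. Since \(\Phi\) is strictly decreasing in each coordinate, \(y^\star\) must be a Pareto-maximal point of \(\operatorname{conv}\{a_j\}\); otherwise some \(y\ge y^\star\), \(y\ne y^\star\) would give strictly smaller failure. A Pareto-maximal point lies on a proper face of the polytope, which has dimension at most \(r-1\). Applying Carath\'eodory within that face gives at most \(r\) vertices.
\item Edge case: if the hull is lower-dimensional, Carath\'eodory already gives at most \(\dim+1\le r\) points.
\end{itemize}
This step is the main obstacle; the boundary-face argument is what I would try first.

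\textbf{Dominance removal.} Suppose \(a_{j_0}\le\sum_{j\ne j_0}\lambda_ja_j\) componentwise, with \(\lambda\) in the simplex on the other configurations. Given any allocation \(z\), move the mass \(z_{j_0}\) onto the other configurations in proportions \(\lambda\). Each coordinate of \(Az\) weakly increases, and \(\Phi\) is decreasing in every coordinate, so \(F_k\) does not increase. Hence the optimum over allocations without \(j_0\) is no worse.

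\textbf{Maximin bound and limit.} Set \(y=Az\), so \(F_k(z)=\sum_i w_ie^{-ky_i}\).
\begin{itemize}
\item Upper bound on \(F_k\): since \(\sum_i w_i=1\), \(F_k(z)\le e^{-k\,m(z)}\). This gives \(\phi_k(z)\ge m(z)\).
\item Lower bound on \(F_k\): keeping only the minimizing index gives \(F_k(z)\ge w_{\min}e^{-k\,m(z)}\). This gives \(\phi_k(z)\le m(z)+\log(1/w_{\min})/k\).
\end{itemize}
These are \eqref{eq:maximin-bound}. For the limit:
\begin{itemize}
\item Minimizing \(F_k\) is the same as maximizing \(\phi_k\).
\item The bound says \(\phi_k\to m\) uniformly on \(\Delta_m\), with error at most \(\epsilon_k=\log(1/w_{\min})/k\).
\item Let \(z_k\) be relaxed optimizers and let \(\bar z\) be the limit of a subsequence. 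For any \(z\), the bounds and optimality of \(z_k\) give
\[
m(z_k)\ge\phi_k(z_k)-\epsilon_k\ge\phi_k(z)-\epsilon_k\ge m(z)-\epsilon_k.
\]
\item The function \(m\) is a minimum of finitely many linear functions, hence continuous. Passing to the limit along the subsequence gives \(m(\bar z)\ge m(z)\), so \(\bar z\) maximizes \(m\).
\end{itemize}
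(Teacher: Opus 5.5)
Your proof is correct. The dominance and maximin parts follow the paper's argument; your chain \(m(z_k)\ge m(z)-\epsilon_k\) plus continuity of \(m\) just unpacks the paper's one-line appeal to uniform convergence.

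The sparse-support step takes a genuinely different route. The paper uses the first-order necessary condition \(\nabla G(h^\star)\cdot(h-h^\star)\ge0\) on the hull. So \(h^\star\) lies in the face exposed by the nonzero normal \(-\nabla G(h^\star)\). That face sits inside a hyperplane, so it has dimension at most \(r-1\) whatever the dimension of the hull, and Carath\'eodory finishes.

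You use only strict coordinatewise monotonicity of \(\Phi\). Pareto-maximality rules out interior points of a full-dimensional hull, so \(y^\star\) lies on a supporting hyperplane. You then treat the lower-dimensional hull as a separate case.

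What each route buys: yours needs no differentiability and works for any strictly coordinatewise decreasing function of the hazard vector. The paper's avoids the case split, because the gradient supplies the hyperplane in every case.

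In your write-up, state explicitly that the face containing \(y^\star\) is the convex hull of the \(a_j\) lying in it. The reason is that any \(a_j\) with positive weight in a representation of a point on a supporting hyperplane must itself lie on that hyperplane. This is what makes Carath\'eodory return hazard vectors, i.e.\ configurations, rather than arbitrary points of the face.
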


\begin{proof}
Let \(h^\star=\sum_jz_j^\star a_j\) be an optimal point in the convex hull
of the hazard vectors, and let
\[
G(h)=\sum_iw_i e^{-kh_i}.
\]
First-order optimality implies
\(\nabla G(h^\star)\cdot(h-h^\star)\geq0\) for every feasible \(h\).
Thus \(h^\star\) lies in the exposed face maximizing
\(-\nabla G(h^\star)\cdot h\). The exposing normal is strictly positive and
nonzero, so the face has affine dimension at most \(r-1\). Carath\'eodory's
theorem within that face represents \(h^\star\) using at most \(r\) hazard
vectors.

If \(a_j\) is componentwise dominated by a convex combination of the other
vectors, replacing any mass on \(j\) by that combination weakly increases
every coordinate of \(h\). Since \(G\) is decreasing in every coordinate,
the objective cannot worsen.

Let \(i^\star\) attain \(m(z)\). Since the weights sum to one,
\[
w_{\min}e^{-km(z)}
\leq F_k(z)\leq e^{-km(z)}.
\]
Taking \(-k^{-1}\log\) gives \eqref{eq:maximin-bound}. The convergence
\(\phi_k\to m\) is uniform on the compact simplex, so every accumulation
point of maximizers of \(\phi_k\), equivalently minimizers of \(F_k\),
maximizes \(m\).
\end{proof}

\section{Additional proof details}

\subsection{Association inequality used in Theorem~\ref{thm:nested}}

If \(U,U'\) are independent with the same law and \(f,g\) are both
nonincreasing, then
\[
2\operatorname{Cov}(f(U),g(U))
=
\E[(f(U)-f(U'))(g(U)-g(U'))]\geq0.
\]
Apply this with \(f=m_t\) and
\(g(u)=(1-u)^{\ell-k}\).

\subsection{Boundary probability cases}

The log-success-response formulation is cleanest when \(0<p_t<1\). If a task has
\(p_t=0\) at an interior temperature and \(p_t\) is differentiable and
nonnegative in a neighborhood, then \(\dot p_t=0\), so the task does not
contribute to \eqref{eq:derivative} at that point. Tasks with \(p_t=1\)
receive zero weight for \(k>1\). Extensions can restrict \(\nu_{k,t}\) to
positive-weight tasks and take limits when \(Z_{k,t}>0\), one-sided
derivatives exist, and dominated convergence justifies the limits.

\section{Reproducibility}

The accompanying standard-library Python checks verify the closed-form
optimizers, upward and downward regimes, discrete-convexity identities,
finite-difference diagnostics, and displayed numerical values. The figure is
generated from the analytical formulas. These computations are regression
checks; the general results rely on the proofs above.

\end{document}